\documentclass{article}
\usepackage{graphicx}
\usepackage{physics}
\usepackage{amsmath}
\usepackage{amssymb}
\usepackage{amsthm}
\usepackage{mathtools}
\usepackage{a4wide}
\usepackage{authblk}
\usepackage{xurl}
\usepackage{tikz} 
\usepackage{pgfplots}
\usepackage{hyperref}
\pgfplotsset{compat=1.18}
\usepackage{multirow}
\newtheorem{theorem}{Theorem}[section]
\newtheorem{definition}[theorem]{Definition}

\usepackage[letterpaper,top=2cm,bottom=2cm,left=3cm,right=3cm,marginparwidth=1.75cm]{geometry}

\usepackage{siunitx}
\sisetup{output-exponent-marker=\ensuremath{\mathrm{e}}}

\title{Noise-based reward-modulated learning}

\date{}

\author[]{Jesús García Fernández}
\author[]{Nasir Ahmad}
\author[]{Marcel van Gerven}

\affil[1]{Department of Machine Learning and Neural Computing, Donders Institute for Brain, Cognition and Behaviour\\
  Radboud University, Nijmegen, the Netherlands}

\begin{document}
\maketitle

\begin{abstract}

The pursuit of energy-efficient and adaptive artificial intelligence (AI) has positioned neuromorphic computing as a promising alternative to conventional computing. However, achieving learning on these platforms requires techniques that prioritize local information while enabling effective credit assignment. Here, we propose noise-based reward-modulated learning (NRL), a novel synaptic plasticity rule that mathematically unifies reinforcement learning and gradient-based optimization with biologically-inspired local updates. NRL addresses the computational bottleneck of exact gradients by approximating them through stochastic neural activity, transforming the inherent noise of biological and neuromorphic substrates into a functional resource. Drawing inspiration from biological learning, our method uses reward prediction errors as its optimization target to generate increasingly advantageous behavior, and eligibility traces to facilitate retrospective credit assignment. Experimental validation on reinforcement tasks, featuring immediate and delayed rewards, shows that NRL achieves performance comparable to baselines optimized using backpropagation, although with slower convergence, while showing significantly superior performance and scalability in multi-layer networks compared to reward-modulated Hebbian learning (RMHL), the most prominent similar approach. While tested on simple architectures, the results highlight the potential of noise-driven, brain-inspired learning for low-power adaptive systems, particularly in computing substrates with locality constraints. NRL offers a theoretically grounded paradigm well-suited for the event-driven characteristics of next-generation neuromorphic AI.

\end{abstract}


\section{Introduction}

Modern artificial intelligence (AI) models are growing exponentially, leading to proportional increases in their computational demands and energy consumption~\cite{patterson2021carbon}. This trend, combined with the growing requirement for widespread AI deployment in local edge devices~\cite{singh2023edge, wang2025optimizing}, creates a massive demand for systems that are both fast and energy-efficient. Meeting this demand requires a fundamental shift in computing architecture to overcome the energy and latency bottlenecks of conventional Von Neumann systems~\cite{von1993first}. Neuromorphic computing~\cite{schuman2022opportunities, roy2019towards, kudithipudi2025neuromorphic} offers a promising path forward by mimicking the low energy, event-driven, and parallel processing capabilities of the brain~\cite{schuman2022opportunities, maass2014noise}. Although neuromorphic hardware is highly efficient for inference, the ability to perform effective on-chip learning and adaptation remains the primary challenge, lacking suitable on-chip learning algorithms.

Backpropagation (BP) remains the workhorse algorithm for training artificial neural networks, computing precise global gradients for weight updates~\cite{rumelhart1986learning}. However, its operational requirements are fundamentally incompatible with the constraints of neuromorphic hardware~\cite{esser2015backpropagation}. BP requires precise global error propagation, which translates into high-bandwidth global data movement and large memory requirements to store activations, the main source of energy consumption in deep learning systems~\cite{marblestone2016toward}. This contrasts sharply with the hardware needs for local synaptic updates that rely only on information available at the synaptic weight. In addition, BP's iterative forward and backward passes are incompatible with the asynchronous event-driven regime in which most neuromorphic devices operate~\cite{neftci2017event}, and its deterministic nature cannot tolerate inherent noise in neuromorphic circuits. 

In response, local plasticity approaches and biologically-inspired learning rules have been proposed to bypass the constraints of BP~\cite{whittington2019theories, whittington2017approximation, lee2015difference, schmidgall2024brain, illing2019biologically, yi2023learning}. A particularly promising family of methods is noise-based learning, which leverages stochastic perturbations (either in weights~\cite{Fernandez2024Ornstein,zuge2023weight,cauwenberghs1992fast, dembo1990model} or activations~\cite{flower1992summed, hiratani2022stability, williams1992simple, werfel2003learning, fiete2006gradient}) to approximate the true direction of the gradient. By injecting small random perturbations, these methods avoid the need for perfectly matched feedback pathways and can leverage inherent noise~\cite{seung2003learning, maass2014noise}. 

Given the prevalence of noise on neuromorphic and biological substrates~\cite{faisal2008noise, stein2005neuronal, mcdonnell2011benefits}, using noise as a mechanism to learn synaptic weights is an area of growing interest, with reward-modulated Hebbian learning (RMHL)~\cite{legenstein2010reward, miconi2017biologically} as a promising candidate. RMHL, which is a form of three-factor Hebbian learning~\cite{fremaux2016neuromodulated,kusmierz2017learning}, offers a potentially powerful mechanism for credit assignment without explicit backpropagation of errors. Despite theoretical advances, few noise-based methods have been adapted to real-world tasks. Some attempts which have been made include application to control problems~\cite{burms2015reward}, for spatiotemporal pattern generation~\cite{kawai2023spatiotemporal}, and for training non-differentiable spiking neural networks~\cite{ferrari2008biologically}, and these particularly struggle with delayed reward regimes. This gap has limited the integration of noise-based approaches for effective credit assignment in temporally extended setups.

To bridge these gaps, we propose noise-based reward-modulated learning (NRL), a biologically-inspired, gradient-free learning method, which is compatible with delayed feedback/rewards. We derive our learning rule from first principles, employing directional derivatives to compute a local gradient estimate at each synapse. These directional derivatives are implemented through stochastic neurons, aligning with the noisy nature of neuromorphic physical systems, and are estimated with two forward passes: a “noisy” pass with stochastic neurons and a “clean” pass without noise. In scenarios where a noiseless pass is infeasible, multiple noisy passes can be averaged to approximate the clean pass, maintaining performance as shown in our experiments. NRL draws from established neuroscientific concepts, by employing eligibility traces~\cite{gerstner2018eligibility, izhikevich2007solving}, which tag synapses based on recent pre/post activity and noise perturbations, allowing retrospective credit assignment over behavioral timescales. It also incorporates reward prediction errors (RPEs), a dopamine-like signal~\cite{schultz1997neural, schultz2016dopamine, montague1996framework} that modulates eligibility traces to reinforce actions that yield unexpected positive rewards, driving learning toward increasingly more advantageous behaviors. We deliberately utilize a rate-based neural model, instead of a spiking model, to focus the analysis on the core computational capabilities and limitations of the learning rule itself, making our findings immediately applicable to neuromorphic substrates.

We validate NRL on a suite of reinforcement-learning benchmarks that span immediate-reward and delayed-reward tasks, demonstrating significantly superior learning efficiency compared to RMHL (particularly when rewards were delayed by many steps) and competitive final performance compared to the gradient-based baseline. Moreover, unlike RMHL, which breaks down in deeper architectures, NRL scales to multilayer networks, though converging more slowly due to its intrinsic stochasticity. 

Our findings argue that NRL represents a promising and scalable step toward developing low-energy and fast adaptive AI systems. By transforming the stochasticity of neuromorphic hardware from a challenge into a computational resource, NRL provides a crucial new algorithmic path for enabling complex, continuous learning at the edge, as well as novel, alternative, robust learning schemes for machine learning.

\section{Methods}

\subsection{Neural system-environment interaction and reward signaling}
\label{agent_setup}

We model a neural system within a dynamic environment, grounded in the reinforcement learning framework~\cite{sutton2018reinforcement}. In this framework, the state of the environment at time $t$ is denoted by $s_t$. The neural system perceives $s_t$ and responds by initiating actions $a_t$, which influence the environment. Each action or sequence of actions results in feedback in the form of positive or negative rewards $r_t$. This iterative process of perception, action, and outcome forms a feedback loop that enables the system to learn and adapt to the changing environment, ultimately seeking to maximize the rewards it receives.

The decision-making of the system is guided by a policy $\pi$, which represents the probability of choosing an action, $a_t$, in a given state $s_t$ at time $t$, i.e., $\pi(a_t \mid s_t)$. This policy captures the system's learned strategy for choosing actions that are likely to yield favorable outcomes. To determine an action, the system computes a probability distribution across the set of possible actions, modeled as a categorical distribution $P(a_t \mid  s_t) \equiv \pi(a_t \mid s_t)$ 
with $\sum_{a \in A} \pi(a \mid s_t) = 1$, where $A$ denotes the set of all possible actions.

The rewards $r_t$, obtained from the environment, serve as crucial learning signals. Internally, the system maintains a prediction of the expected reward, denoted by $\bar{r}_t$. Although this prediction can be calculated through various mechanisms, we model it here for simplicity as a running average of recent rewards as
\begin{align}
    \bar r_{t+1} =\bar r_{t} +  \lambda \left( r_{t+1} - \bar r_{t} \right) 
\end{align}
where $\lambda$ is a smoothing factor that governs the influence of past rewards on the current prediction. The mismatch between the actual reward and the predicted reward, known as the reward prediction error, is then computed as
\begin{align}
 \delta_t = r_t - \bar r_t \,.
\end{align}
This RPE signal, which draws a strong parallel to the phasic activity of dopamine neurons in the brain~\cite{schultz1997neural, schultz2016dopamine, montague1996framework, fiorillo2003discrete}, functions as a low-bandwidth global feedback signal that is inherently compatible with the communication constraints of neuromorphic hardware. Although our running average is a simplified representation of how reward expectations are formed, it effectively captures the fundamental dynamics of the RPE and its role as an adaptive learning signal. We will explore potential refinements of this reward prediction mechanism in the Discussion section.

\subsection{Derivation of the learning rule}
\label{derivation_section}

This section presents a complete derivation of the NRL update rule. We begin by establishing a gradient-based learning rule rooted in an optimization target that aims to maximize unexpected rewards from the environment by increasing the reward prediction error. Using RPE, rather than direct reward, enables our system to adapt dynamically to changing environments by tracking how rewards deviate from expectations, improving long-term performance by continuously seeking rewards that exceed expectations, and maintaining robustness against variations in reward structures~\cite{sutton2018reinforcement, o2017learning}. 

Building on this foundation, we transition to a directional derivative framework, where intrinsic noise within the network is leveraged to approximate gradients. This critical step eliminates the need for backpropagation and feedback phases, enabling learning through forward passes alone. Finally, we extend our noise-based approach to handle scenarios with delayed rewards, a hallmark of real-world problems. This extension enables NRL to adapt to environments where feedback is obtained only after a series of actions.

\subsubsection{Gradient-based learning rule}

The system is modeled as a multi-layered network of interconnected units, analogous to populations of neurons, which processes input representing sensory observations and produces a probability distribution over possible actions. For simplicity, we refer to these populations as `layers,' where each layer performs a transformation of its input in two stages: first, a linear transformation, followed by a non-linear transformation. This can be expressed mathematically as
\begin{align}
    x^l_t = f(h^l_t) = f(W^l_t x^{l-1}_t)
\label{eq:layer}
\end{align}
where $x^l_t$ represents the output activity of layer $l$, $h^l_t$ represents the layer pre-activation (the combined input to such layer), $W^l_t$ represents the matrix of synaptic weights connecting layer $l-1$ to layer $l$, at time $t$, and $f(\cdot)$ is a non-linear activation function, which introduces crucial non-linearities into the network's computations 

In this setup, the goal is to adjust the synaptic weights, $\{ W^1, \ldots, W^L \}$, across the $L$ layers of the network to maximize the rewards obtained by the system. Our derivation begins by introducing a general parameter $\theta$, which will later be mapped to the specific neural network parameters $W$. The primary learning objective is to maximize the RPE, $\delta_t$,  at any particular moment in time, $t$, effectively driving the system to seek out actions that lead to higher-than-predicted rewards. The reward prediction is modeled as a running average of recent rewards and, as such, the learning rule seeks to outperform previously received rewards. Note that for a fixed reward prediction, this is equivalent to maximizing the reward itself. We express this objective in terms of a parameterized policy at time $t$ as 
\begin{align}
     J(\theta_t) = \mathbb{E}_{\pi_{\theta_t}} \left[\delta_t \right] \,. 
\end{align}
To optimize $J(\theta_t)$, we can incrementally update the weights using the gradients with respect to $\theta_t$ as
\begin{align}
     \theta_{t+1} \leftarrow \theta_t + \eta \nabla J(\theta_t)
\end{align}
where $\eta$ is the learning rate, controlling the size of each update step.

Applying the policy gradient theorem~\cite{sutton1999policy}, which utilizes the likelihood-ratio method, we express the gradient of the objective as
\begin{equation*}
\nabla J(\theta_t)=\mathbb{E}_{\pi_{\theta_t}}\left[\nabla \log \pi_{\theta_t} \left(a_t \mid s_t\right) \delta_t\right]\,.
\end{equation*}
For empirical estimation, in the case of a single sample, we approximate the gradient as
$
    \nabla J(\theta_t) \approx \nabla \log \pi_{\theta_t} \left(a_t \mid s_t\right) \delta_t
$. 
Using this approximation, we define the parameter updates as
\begin{align}
     \theta_{t+1} \leftarrow \theta_t + \eta \; \nabla \log \pi_{\theta_t} \left(a_t \mid s_t\right) \; \delta_t \,. 
\label{eq:update_grad}
\end{align}

Equation~\eqref{eq:update_grad} resembles the REINFORCE update rule~\cite{williams1992simple} but differs by using the reward prediction error, $\delta_t$, as the learning signal instead of cumulative rewards over full trajectories. This RPE-based approach leverages immediate feedback from rewards as they are obtained rather than requiring a full trial completion to estimate the policy gradient. It shares conceptual similarities with actor-critic methods in reinforcement learning~\cite{barto1983neuronlike}, where the policy is adjusted using a temporal difference error. However, we approximate future rewards with a running average of past rewards instead of a critic network,  a less powerful but simpler implementation which maintains adaptive feedback.

\subsubsection{Noise-based learning rule}

The learning rule derived in the previous section still relies on gradient descent.
To avoid using backpropagation to compute the gradients, we propose a noise-based alternative that extends Equation~\eqref{eq:update_grad}. This approach leverages gradient approximation via directional derivatives, enabling a theoretically rigorous derivation of noise-driven learning.

A directional derivative quantifies the rate of change of a function in a specified direction. In our neural system, we implement this concept by introducing random noise into parameters. By comparing the network’s parameters with and without this noise, we obtain an estimate of the gradient direction.
To formalize this, we define $g(\theta_t) = \log \pi_{\theta_t}(a_t \mid s_t)$ and express the gradient term in terms of directional derivatives using the theorems in Appendix \ref{appendix:directional_derivatives} as
\begin{align}
    \nabla g(\theta_t) &= \nabla \log \pi_{\theta_t} (a_t \mid s_t)
= n \mathbb{E} \left[ \bar{\epsilon}_t \nabla_{\bar{\epsilon}_t} g(\theta_t) \right] 
\end{align}
where $\bar{\epsilon}_t = \flatfrac{\epsilon_t}{||\epsilon_t||}$ is a normalized direction vector derived from noise $\epsilon_t \sim \mathcal{N}(0, \sigma^2 I_{n})$ with $n$ the number of parameters.

We may expand the above to approximate the gradient via a finite-difference and by sampling under an empirical distribution
\begin{align}
    \nabla g(\theta_t) 
    \approx n \sum_{i=1}^K \left[ \frac{\epsilon^{(i)}_t}{||\epsilon^{(i)}_t||^2} \left( g\bigl(\tilde{\theta}^{(i)}_t \bigr)-g\bigl(\theta_t \bigr) \right) \right]
\end{align}
where $K$ denotes the number of samples and $\tilde{\theta}_t^{(i)} = \theta_t + \epsilon^{(i)}_t$ the noise-perturbed parameters (see Appendix \ref{appendix:directional_derivatives}). 

In practice, we consider $K = 1$, analogous to single-sample updates in stochastic gradient descent. Although increasing $K$ (using multiple noise samples per step) would reduce the variance of the gradient estimate, it would linearly increase the computational cost and latency by requiring $K$ forward passes per update. Here, $g\bigl(\tilde{\theta}^{(i)}_t \bigr) = \log \pi_{ \tilde \theta_t} (a_t \mid s_t)$ and  $g\left(\theta_t \right) = \log \pi_{\theta_t} (a_t \mid s_t)$ are the log of the noise-perturbed and noise-free output, respectively. Thus, we define $\rho_t = \log \pi_{ \tilde \theta_t} (a_t \mid s_t) - \log \pi_{ \theta_t} (a_t \mid s_t)$, which captures the impact of the noise on the policy. Computationally, this term measures the influence of the perturbation on the action selection, as it quantifies how much the injected noise increased or decreased the likelihood of the chosen action compared to the noise-free baseline. Putting this together for the $K=1$ case, we obtain our noise-based learning rule
\begin{equation*}
    \theta_{t+1} \gets \theta_t + \eta \delta_t \hat{\epsilon}_t \rho_t
\end{equation*}
with $\hat{\epsilon}_t = \flatfrac{\epsilon_t}{||\epsilon_t||^2}$. For convenience, we absorbed the constant $n$ into the learning rate $\eta$.

\subsubsection{Learning with node level noise in neural network}

The noise-based learning rule derived so far is formulated for a general parameter $\theta_t$. Now, we apply this framework to the specific context of a neural network, where $\theta_t$ corresponds to the synaptic weights $W^l_t$ of layer $l$ at time $t$. 

Instead of directly perturbing the weights of the network $W^l_t$, we propose introducing noise directly into the neurons (nodes) of each layer. This strategy is advantageous because it leads to reduced variance in gradient estimation, as perturbations occur in a lower-dimensional space (the neural activations) compared to the full weight space. In addition, it naturally reduces the computational cost and communication overhead associated with high-dimensional perturbations. This approach shares similarities with node perturbation (NP)~\cite{flower1992summed, hiratani2022stability, williams1992simple, werfel2003learning, fiete2006gradient}, which has shown benefits in reducing gradient variance and offering more localized updates.

Given a layer transformation as in Equation~\eqref{eq:layer}, adding noise at the neuron level is represented as
\begin{align}
    \tilde x^l_t = f\left(\tilde h^l_t + \xi^l_t \right) = f\left(W^l_t (\tilde{x}^{l-1}_t)^\top + \xi^l_t \right)
\end{align}
where $\xi^l_t \sim \mathcal{N}(0, \sigma^2 I_{m^l})$ is the injected noise at time $t$, $\sigma^2$ is some arbitrarily small noise scale, and $m^l$ is the number of neurons in layer $l$. We use the notation $\tilde{x}^l_t$ and $\tilde{h}^l_t$ to denote perturbed inputs and pre-activations, respectively, which may also result from perturbations of previous network layers on which layer $l$ depends.

To formulate the learning rule for specific layer parameters $W^l_t \in \{ W^1_t, \ldots, W^L_t \}$, we re-express the gradient in terms of the pre-activations instead of the parameters themselves. Here, the general parameter vector $\theta_t=\{ W^1_t, \ldots, W^L_t \}$ corresponds to the full set of all weight matrices for all layers at time $t$. To do so, we first rewrite the gradient term as
$
   \nabla g\left(W^l_t \right) = \nabla \log \pi_{\theta_t} \left(a_t \mid s_t\right)
$
and apply the chain rule to break down the gradient with respect to $W^l_t$ as
\begin{equation}
\nabla g\left(W^l_t \right) = \nabla g \left(h^l_t  \right) \frac{\partial h^l_t}{\partial W^l_t} = \nabla  g \left(h^l_t \right) (x^{l-1}_t)^\top.
\end{equation}
Notably, we may once again carry out the conversion from this gradient estimation step to a set of directional derivatives such that
\begin{equation}
\nabla g \left(h^l_t  \right) (x^{l-1}_t)^\top \approx  n \sum_{i=1}^K \left[ \frac{\xi^{l}_{t,i}}{||\xi^{l}_{t,i}||^2} \left( g\bigl(\tilde{h}^{l}_{t,i} \bigr)-g\bigl(h^l_t \bigr) \right) \right] (x^{l-1}_t)^\top
\end{equation}
where $i$ indexes over a set of repeated samples of the noise term $\xi$. 
Again, as above, by reducing this to the single sample ($n=1$) case, we can define a layer-specific weight update rule
\begin{equation}
   W^l_{t+1} \gets W^l_t + \eta \delta_t \bar{\xi}^{l}_t {\rho}_t (\tilde{x}^{l-1}_t)^\top
\label{eq:update_noise}
\end{equation}
with ${\rho_t} = \log \pi_{\tilde W_t}(a_t \mid s_t) - \log \pi_{W_t}(a_t \mid s_t)$ and $\bar{\xi}^{l}_t = \flatfrac{\xi^l_t}{||\xi^l_t||^2}$. Here, $\pi_{\tilde W_t}(a_t \mid s_t)$ represents the network's output when noise is injected into the neurons, while $\pi_{ W_t}(a_t \mid s_t)$ corresponds to the output of the noiseless network. This form of the noise-based learning rule is directly applicable in settings with continuous reward signals.

\subsubsection{Learning from sparse rewards}

Our derivations so far assume that synaptic updates happen at every time step, implying a continuous stream of rewards and learning signals. In reality, however, rewards are often sparse, arriving only after a sequence of actions or upon reaching specific milestones.  To handle this, we will modify our learning rule to account for rewards received at arbitrary times.

Let us denote these discrete reward times as $\tau_m \in \{\tau_0, \tau_1, \dots, \tau_M\}$. At these moments, we compute the RPE
as
\begin{align}
 \delta_{\tau_m} = r_{\tau_m} - \bar r_{\tau_m}.
\end{align}
Here, $r_{\tau_m}$ is the reward received at time $\tau_m$, and $\bar r_{\tau_m}$ is the reward prediction, which is updated as the running average of recent rewards:
\begin{align}
    \bar r_{\tau_m} =\bar r_{\tau_{m-1}} +  \lambda \left( r_{\tau_m} - \bar r_{\tau_{m-1}} \right). 
\end{align}
While synaptic updates only occur at these specific reward times $\tau_m$, the learning rule continuously accumulates information between rewards. This information is integrated over time, allowing us to update weights based on what's happened since the last reward. This leads to our modified learning rule
\begin{equation}
    W^l_{\tau_m} \gets W^l_{\tau_{m-1}} + \eta \; \delta_{\tau_m}\; e_{\tau_m} 
\label{eq:final_update}
\end{equation}
where 
\begin{align}
    e_{\tau_m} = \sum_{t=\tau_{m-1}}^{\tau_m} \bar{\xi}^{l}_t {\rho}_t (\tilde{x}^{l-1}_t)^\top
\label{eq:eligibility_trace}
\end{align}
is an eligibility trace that acts as a mechanism to connect past actions with future rewards~\cite{gerstner2018eligibility}. Eligibility traces capture neural activity and other local variables over time, signaling potential synaptic changes. Upon receiving a reward, these traces are modulated by the reinforcement signal, resulting in synaptic updates. Some models view eligibility traces as decaying cumulative activity~\cite{izhikevich2007solving, sutton2018reinforcement}), while others treat them as a full activity history~\cite{miconi2017biologically}, which aligns with our formulation. The eligibility trace efficiently tracks neural information between rewards, facilitating the assignment of credit to past actions. 

Thus, our final learning rule consists of two primary components: (i) an eligibility trace, defined in Equation~\eqref{eq:eligibility_trace}, which accumulates local information over time at each time step, and (ii) a synaptic update, defined in Equation~\eqref{eq:final_update}, triggered upon reward receipt, which modulates the eligibility trace to adjust the synaptic weights. This learning rule constitutes the core of our proposed noise-based reward-modulated Learning (NRL) and is used for all experiments presented in this paper.

\subsection{Neural network architecture}
\label{nn_setup}

The learning and decision-making capabilities of the system are modelled using a feedforward network with $L$ layers, representing interconnected populations of neurons. The transformation performed by each hidden layer at time $t$ can be expressed as
\begin{align}
      \tilde x^l_t &= f \left( W^l_t (\tilde{x}^{l-1}_t)^\top + \xi^l_t \right) 
\end{align}
and the readout in the output layer is given by
\begin{align}
      \tilde y_t &= s \left( W^l_t (\tilde{x}^{l-1}_t)^\top + \xi^l_t \right)
\end{align}
where $x^l_t$ represents the activity of units in layer $l$, $W^l_t$ is the matrix of synaptic weights connecting layer $l-1$ to layer $l$, and $\xi^l_t \sim \mathcal{N}(0, \sigma^2 I_{m^l})$ represents random noise introduced into layer $l$, with $m^l$ being the number of units in such layer. This noisy propagation of activity is termed a ``noisy pass.'' The function $f(\cdot)$ is a non-linear activation function, which we implement as the LeakyReLU function such that $f(x) = x$ if $x\geq 0$ and $f(x) = \alpha x$ with $0<\alpha \ll 1$. The function $s(\cdot)$ represents a softmax transformation ${s}_i(x) = \flatfrac{e^{x_i}}{\sum_{j=1}^{m^l} e^{x_j}}$, which transforms the network's output into a probability distribution over possible actions. We also define a ``clean pass," representing the network's activity in the absence of noise, by $x^l_t = f \left( W^l_t (x^{l-1}_t)^\top \right)$ and $y_t = s \left( W^l_t (x^{l-1}_t)^\top \right)$.

\subsection{Experimental validation}

We validate our approach across a range of simulated environments, comparing it against established baseline learning methods. Each environment presents a problem with discrete episodes, where the system’s learning is guided solely by positive or negative rewards. We investigate both scenarios with immediate reward and those involving delayed reward. In the delayed reward scenarios, a single reward is provided after a sequence of actions, challenging our system to assign credit retrospectively to the actions that contributed to the outcome. For the instantaneous reward setting, we use the Reaching problem~\cite{georgopoulos1986neuronal}, while the delayed reward setting uses the Cartpole~\cite{barto1983neuronlike} and Acrobot~\cite{sutton1995generalization} problems. We use implementations given by the libraries OpenAI Gym~\cite{brockman2016openai} and NeuroGym~\cite{molano2022neurogym} for the different environments.

We compare NRL to two baselines: an exact-gradient version of NRL, which serves as an ``optimal performance'' benchmark, and an RMHL approach.

The first baseline, which we refer to as the exact-gradient method (BP baseline), is similar to an actor-only variant of actor-critic methods, relying on a running average of past rewards as the prediction error. For this baseline, we compute the exact policy gradients for the entire episode trajectory using backpropagation through time (BPTT)~\cite{Werbos1990}. Given sparse rewards, the weights at each layer are updated at the end of each episode according to 
\begin{align}
W^l_{\tau_m}\gets W^l_{\tau_{m-1}}+\eta \left( \nabla_{W^l_{\tau_{m-1}}}\log\pi_{\theta_{\tau_{m-1}}}(a_t\mid s_t) \right ) \delta_{\tau_m}.
\label{eq:BP_update_sparse}
\end{align}

The second baseline, motivated by the noise-based nature of NRL, is the RMHL rule from~\cite{legenstein2010reward} with an explicit-noise approximation -- the original version where noise is inferred from neural activities proved too unstable for the problems considered here. For delayed rewards, we adapt it similarly to~\cite{miconi2017biologically}, updating eligibility traces as
\begin{align}
    e_{\tau_m} = \sum_{t=\tau_{m-1}}^{\tau_m} \xi^{l}_t (\tilde{x}^{l-1}_t)^\top \,.
\label{eq:eligibility_trace_RMHL}
\end{align}
Similar to NRL's noise-based learning rule defined in Equation~\eqref{eq:final_update}, the synaptic weights for RMHL are then updated by modulating this eligibility trace with the reward prediction error. 

In Table \ref{tab:learning-rules}, we present the complete three learning rules (NRL, BPTT, and RMHL) in a single table for visualization and quick comparison. All updates are written in a per-layer form using $W^l$ and assume a delayed reward received at time $\tau_m$.

\begin{table}[h]
\centering
\caption{Learning rules. The terms in the BPTT update are rearranged for easier comparison. Updates on NRL and RMHL rely on local synaptic information (inputs $\tilde{x}$, noise $\xi$) modulated by global broadcasted scalars (RPE $\delta$, and for NRL, noise impact $\rho$).  In contrast, BPTT relies on non-local gradient information ($\nabla$) propagated from later layers. The updates are given by $W^l_{\tau_m}\gets W^l_{\tau_{m-1}}+\eta \; \Delta W^l_{\tau_m}$.}
\label{tab:learning-rules}
\begin{small}
\begin{tabular}{m{50pt}l}
\vspace{-3pt}
& \\
\textbf{Rule} & \textbf{Update} \\
 \hline
& 
\vspace{-7pt}
\\
NRL &
$\displaystyle
\Delta W^l_{\tau_m} = \delta_{\tau_m}\!\sum_{t=\tau_{m-1}}^{\tau_m}\bar{\xi}^{\,l}_t\,\rho_t\,(\tilde{x}^{l-1}_t)^\top
$ \\
& 
\vspace{-7pt}
\\
 \hline
& 
\vspace{-7pt}
\\
BPTT &
$\displaystyle
\Delta W^l_{\tau_m} = \delta_{\tau_m} \nabla_{W^l_{\tau_{m-1}}}\log\pi_{\theta_{\tau_{m-1}}}(a_t\mid s_t)
$
\\
& 
\vspace{-7pt}
\\
& 
\vspace{-7pt}
\\
 \hline
& 
\vspace{-7pt}
\\
RMHL &
$\displaystyle
\Delta W^l_{\tau_m} = \delta_{\tau_m}\!\sum_{t=\tau_{m-1}}^{\tau_m}\xi^{\,l}_t\,(\tilde{x}^{l-1}_t)^\top
$ \\
 \hline
\end{tabular}
\end{small}
\end{table}

In environments with high reward variability, like Cartpole and Acrobot, we stabilize synaptic updates by scaling the RPE by dividing by $r_{\tau_m}$. This normalization accounts for the non-stationary nature of the return in some tasks, where the reward magnitude scales with trial duration. By treating the error as relative to the current return, we prevent update instability as the agent's performance improves. Hyperparameter values and training details are provided in Appendix \ref{appendix:hyperparams}.  All experiments are implemented in Python using PyTorch~\cite{paszke2019pytorch}. Our models and scripts are available for reproducibility at \url{https://github.com/jesusgf96/noise-based-reward-modulated-learning}.

\section{Results}

\subsection{Performance on control tasks with instantaneous and delayed rewards}

\begin{figure*}[!ht]
    \centering
    \includegraphics[width=\linewidth]{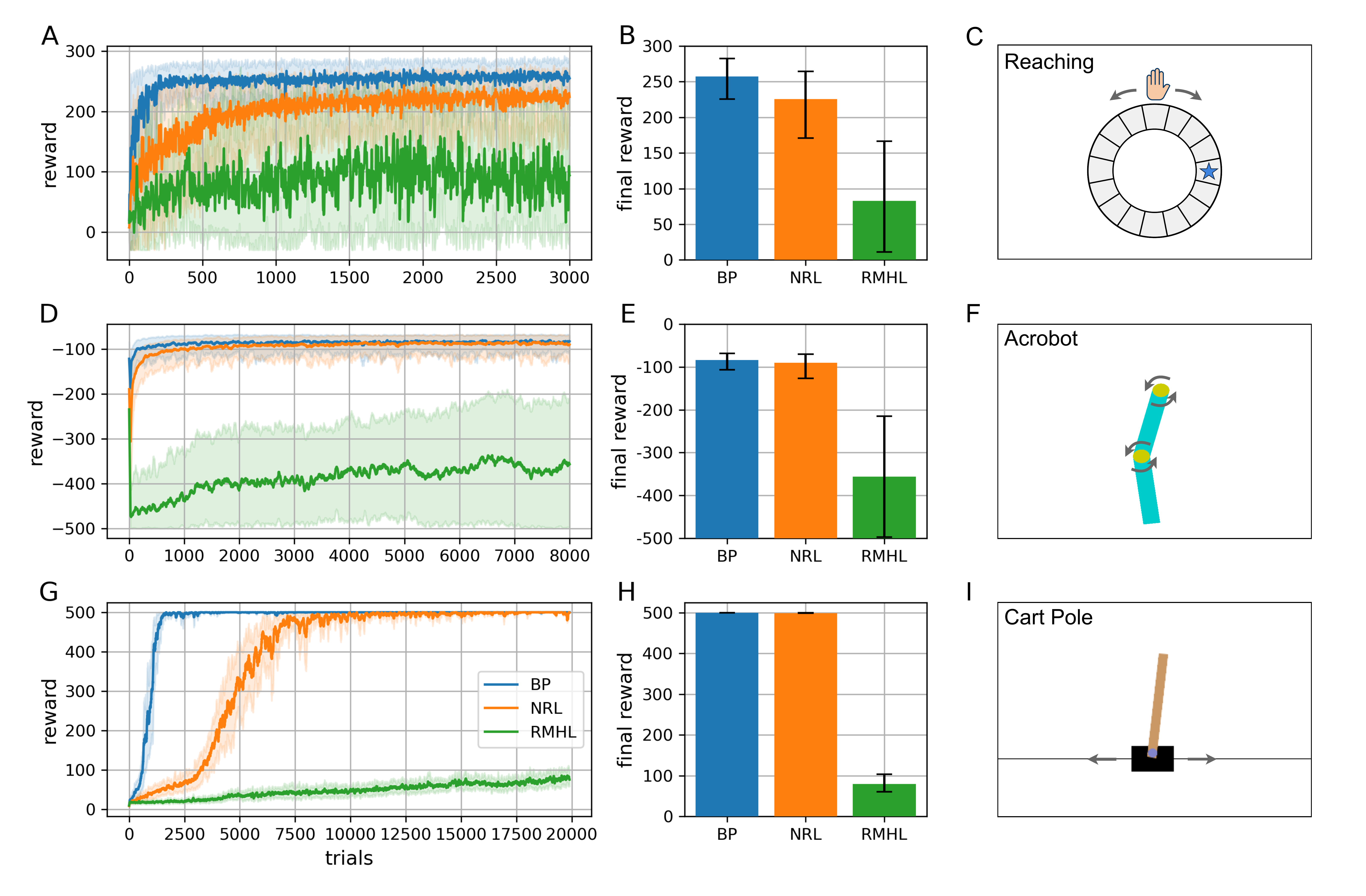}    
    \caption{\textbf{Performance on benchmarks.} A, B, C: Reaching problem. D, E, F: Acrobot problem. G, H, I: Cartpole problem. Left panels: Performance across trials averaged over 5 runs. Centre panels: Final performance (mean of the last 50 trials), averaged over 5 runs. Right panels: Problem visualization.} 
    \label{fig:all_experiments}
\end{figure*}

To ensure comparability with RMHL methods, which typically utilize single-hidden-layer networks, we first conduct experiments with a one-hidden-layer neural network for BP, NRL, and RMHL. Training details are provided in Appendix \ref{appendix:hyperparams}. In Section ``\nameref{scalability}'', we extend the comparison to deeper networks with multiple hidden layers.

First, the Reaching problem~\cite{georgopoulos1986neuronal}, visualized in Fig.~\ref{fig:all_experiments}C, is an immediate reward problem that requires the system to reach and maintain a position at a target on a 1D ring by moving left, right, or remaining stationary. At each step, the system receives information about both the target's position and its own, and a reward is provided based on proximity to the target over fixed-duration trials. Average performance across-trial and final performance (mean of the last 50 trials) are shown in Fig.~\ref{fig:all_experiments}A and Fig.~\ref{fig:all_experiments}B, respectively.

Second, the Acrobot problem~\cite{sutton1995generalization}, visualized in Fig.~\ref{fig:all_experiments}F, involves delayed reward and requires controlling a two-link robotic arm to reach a target height. At each time step, the system receives information about the angles and angular velocities of the two links and chooses one action: clockwise torque, counterclockwise torque, or no torque. Rewards are given based on the speed of completion, with a maximum time allowed. Average performance across trials and final performance (mean of the last 50 trials) are shown in Fig.~\ref{fig:all_experiments}D and Fig.~\ref{fig:all_experiments}E, respectively.

The third and most challenging problem, the Cartpole problem~\cite{barto1983neuronlike}, visualized in Fig.~\ref{fig:all_experiments}I, is a delayed reward problem where the system must balance a pole on a cart by moving left or right. At each time step, the system receives information about the cart's position and velocity, along with the pole's angle and angular velocity, and responds accordingly. Performance is measured by the time the pole remains balanced, with a maximum time allowed. Average performance across trials and final performance (mean of the last 50 trials) are shown in  Fig.~\ref{fig:all_experiments}G and Fig.~\ref{fig:all_experiments}H, respectively.
For all three tasks, NRL achieves a final performance comparable to the gradient-based baseline, BP, demonstrating a vast difference compared to RMHL.

\subsection{Scalability to deeper architectures}
\label{scalability}

Here, we demonstrate that NRL can effectively assign credit in neural networks with multiple hidden layers; a challenging scenario where most biologically plausible algorithms struggle. We use both the Acrobot and the Cartpole tasks for this purpose, as they present a delayed reward problem, making them a more realistic test for credit assignment in reinforcement learning. In this comparison, we include the same baselines, BP and RMHL. Figure~\ref{fig:scalability} displays the results for the Acrobot task and Fig.~\ref{fig:scalability_cartpole} displays the results for the Cartpole tasks for neural networks consisting of two and three hidden layers. The left panels of these two figures show performance across trials, averaged over 5 runs. In contrast, the right panels display the final performance (mean of the last 50 trials), also averaged over 5 runs. 

\begin{figure*}[!ht]
    \centering
    \includegraphics[width=\linewidth]{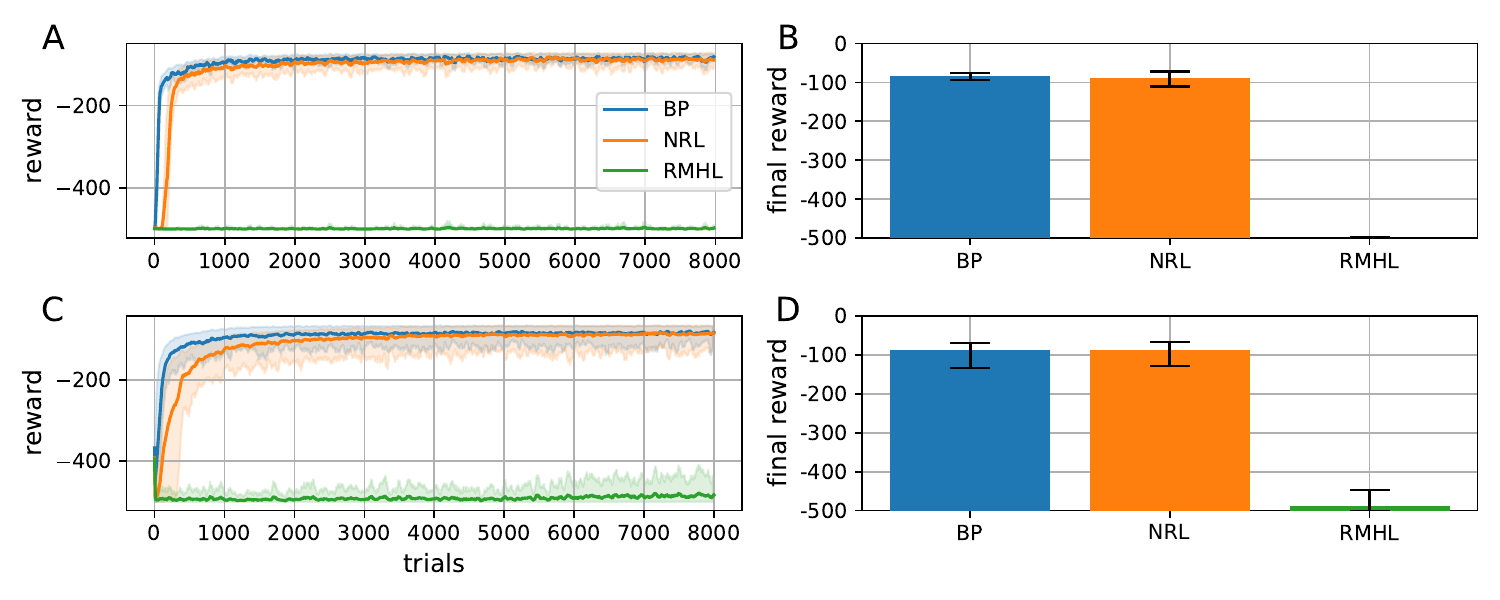}
    \caption{\textbf{Performance on deeper networks for the Acrobot.} A, B: 2-hidden layer networks. C, D: 3-hidden layer networks. Left panels: Performance across trials averaged over 5 runs. Right panels: Final performance (mean of the last 50 trials), averaged over 5 runs.} 
    \label{fig:scalability}
\end{figure*}
\begin{figure*}[!ht]
    \centering
    \includegraphics[width=\linewidth]{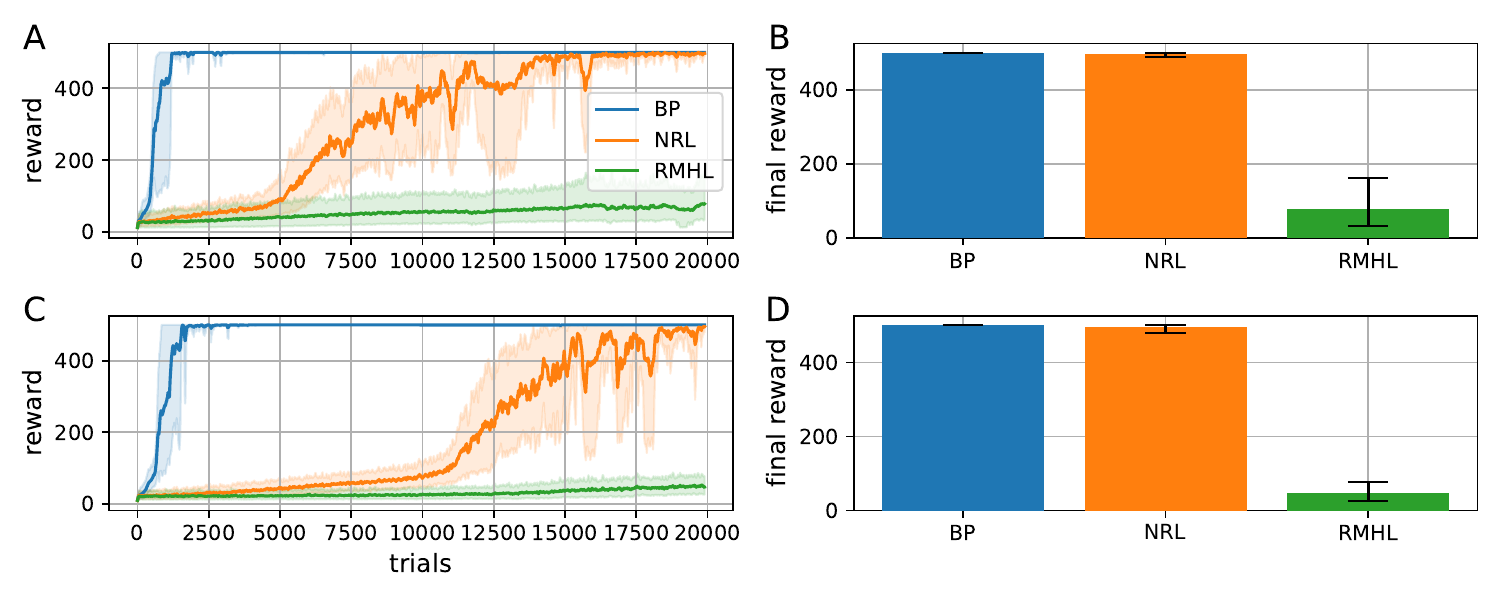}
    \caption{\textbf{Performance on deeper networks for the Cartpole.} A, B: 2-hidden layer networks. C, D: 3-hidden layer networks. Left panels: Performance across trials averaged over 5 runs. Right panels: Final performance (mean of the last 50 trials), averaged over 5 runs.} 
    \label{fig:scalability_cartpole}
\end{figure*}

Our results indicate that NRL successfully learns to solve the tasks, achieving performance comparable to BP. In contrast, RMHL struggles with credit assignment in deeper networks. However, NRL requires more trials to converge as the network depth increases, which is an expected outcome due to the stochastic nature of the updates~\cite{hiratani2022stability}. A similar trend is observed with BP, though to a lesser extent, as its gradient-based updates inherently provide more directed adjustments.

\subsection{Learning using only noisy passes}
\label{approx_clean_pass}

In NRL, $\rho_t$ is calculated as the difference in the network's output between the clean and noisy passes. However, generating a perfectly noiseless pass may be hardware-infeasible or energy-prohibitive in neuromorphic circuits. Instead, we show that the clean network's output can be approximated by averaging the outputs from multiple noisy passes, as the injected noise averages to zero $\lim_{N \to \infty} \frac{1}{N} \sum_{i=1}^{N} \xi_{t,i}^l = 0$.
This assumes that the network dynamics are faster than the environment dynamics, allowing the network to perform multiple forward passes before the environment changes.

To evaluate the accuracy of this approximation, we employ a single hidden-layer network and the Acrobot problem. We chose this task as it offers a representative delayed-reward challenge, unlike the Reaching task. Its stable dynamics also allowed us to clearly separate the approximation error from the variance inherent to the task itself, which was difficult with Cartpole's sensitive initial conditions. The difference between the clean pass output and the averaged noisy passes output was calculated for each timestep and averaged over 500 timesteps, as shown in Fig.~\ref{fig:approximation_error}A.  Furthermore, Fig.~\ref{fig:approximation_error}B illustrates the performance on the Acrobot problem using only noisy forward passes, starting with a minimum of 2 noisy passes. We also extended this evaluation to 10 noisy passes to explore the impact of increasing the number of passes, showing slightly faster initial convergence.

\begin{figure*}[!ht]
    \centering
    \includegraphics[width=\linewidth]{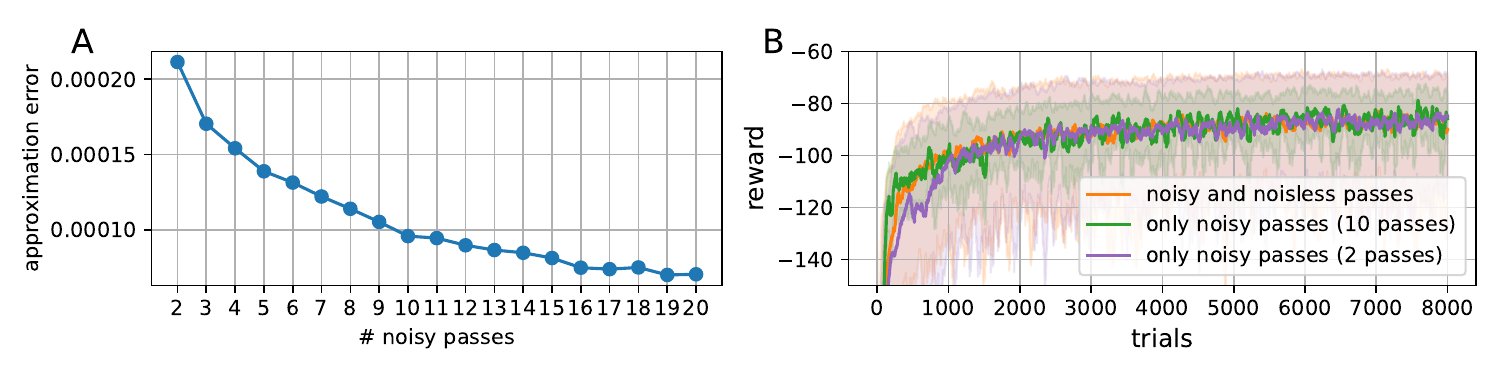}
    \caption{\textbf{Learning using only noisy passes for the Acrobot.} A: Clean pass approximation error. Each data point is computed with an absolute error, averaged over 500 timesteps using the Acrobot problem. B: Acrobot problem. Performance across trials, averaged over 5 runs, with mean, minimum, and maximum values displayed.}
    \label{fig:approximation_error}
\end{figure*}

\section{Discussion}

In this work, we propose NRL, noise-based reward-modulated learning, a novel synaptic plasticity rule that bridges the principles of reinforcement learning and optimization with biologically inspired Hebbian updates. By leveraging stochastic neural activity, our method produces local synaptic updates modulated by a global reinforcement signal. It offers an efficient approach for neural adaptation, particularly suited to neuromorphic systems.  

We employ a top-down methodology, deriving our learning rule from the mathematical principles of gradient-based optimization. We approximate the gradients through directional derivatives and incorporate bottom-up constraints inspired by biological systems. Specifically, the gradient-approximation process is implemented through stochastic neural activity, directly aligning with the inherent noise in neuromorphic substrates~\cite{schuman2022opportunities, roy2019towards, kudithipudi2025neuromorphic} and the brain~\cite{faisal2008noise, stein2005neuronal, mcdonnell2011benefits}. Reinforcement learning mechanisms are realized through reward prediction errors (emulating dopaminergic reward signals~\cite{schultz1997neural, schultz2016dopamine}), eligibility traces, and local synaptic plasticity. These elements also enable learning under uncertainty~\cite{fiser2010statistically}. This approach results in a learning rule that is both theoretically grounded and potentially realizable in event-driven neuromorphic hardware. 

Our results demonstrate the learning effectiveness of NRL, solving both immediate and delayed feedback/reward problems in simulated environments, achieving performance comparable to baselines optimized employing backpropagation. In addition, it shows a significant improvement over RMHL~\cite{legenstein2010reward, miconi2017biologically}, a modern approach that also uses noise as the core of its synaptic plasticity. Our scalability results further demonstrate that NRL effectively assigns credit even in multilayer networks, a challenge for many biologically plausible or local algorithms, indicating its potential for scaling. This is a critical difference from RMHL, which struggles to assign credit in multilayer networks, as evidenced by its minimal performance improvements across trials. Nevertheless, as the depth of the network increases, NRL shows slower convergence due to its noise-driven updates. Beyond network depth, NRL performance is also influenced by the duration of the reward delay. As this delay increases, the eligibility trace accumulates more noise, lowering the signal-to-noise ratio in the gradient estimate. Consequently, although our RPE normalization prevents instability caused by large reward values in long episodes, we expect convergence to slow down as the delay extends, reflecting the difficulty of assigning credit over long time windows. This challenge can be compared with the vanishing gradient problem in BPTT, characterized by a signal decay. Yet, it contrasts with the additive variance accumulation problem seen in NRL, where the learning signal is not lost, but rather obscured by the noise of non-causal perturbations.

Regarding the scope of applicability, our framework suggests specific behaviors in complex scenarios. For high-dimensional observation spaces, the main challenge for perturbation-based learning is the potential variance increase in the gradient estimate. In this scenario, NRL may require more samples for convergence. However, the inclusion of a baseline via the RPE serves as a variance reduction mechanism \cite{williams1992simple}, stabilizing the learning signal even in larger input spaces. When it comes to handling continuous action spaces, NRL, as a policy gradient approximation, is naturally compatible with continuous control. This extension would not require any changes in the learning rule, but simply replacing the categorical output with a Gaussian policy head, where noise perturbs the action mean directly.

NRL shares similarities with node perturbation methods~\cite{flower1992summed, hiratani2022stability, williams1992simple, werfel2003learning, fiete2006gradient}, which also use noise to guide learning. However, they differ in their optimization objective and temporal scope. Standard node perturbation is formulated as a zeroth-order optimization technique for estimating the gradient of an immediate loss function. In contrast, NRL is derived as an approximation of policy gradients, designed to maximize long-term expected return. This distinction is realized through the use of an eligibility trace~\cite{gerstner2018eligibility, izhikevich2007solving}, which keeps a temporary memory of local information (perturbation, activations, etc) until a delayed reward arrives. This effectively assigns credit to past neural states when actions and outcomes are separated in time. In addition, NRL also incorporates variance reduction via the RPE ($\delta_t = r_t - \bar{r}_t$), where the expected reward $\bar{r}_t$ acts as a baseline to stabilize the gradient estimate.

Gradient approximation, a cornerstone of our learning rule's derivation, is also crucial in mathematical optimization and machine learning, enabling learning when exact gradients are computationally expensive or unavailable. Techniques such as forward gradient methods~\cite{baydin2022gradients} and zeroth-order optimization~\cite{liu2020primer, chen2023deepzero} have emerged, utilizing stochastic perturbations and function value comparisons to estimate gradients. Recent advances, such as employing local auxiliary networks for informed gradient guesses~\cite{fournier2023can}, have significantly improved the alignment of these approximations with exact gradients, reducing the performance gap with backpropagation. These variance-reduction strategies could similarly enhance our approach by improving the quality of the updates and scalability to more complex architectures. Furthermore, compared to traditional gradient-based methods like backpropagation, gradient approximation methods offer a key advantage in computationally constrained scenarios, or when using non-differentiable networks, such as spiking neural networks~\cite{yamazaki2022spiking, tavanaei2019deep}.

Additionally, the local-update structure of NRL provides direct compatibility with neuromorphic substrates \cite{schuman2022opportunities}, where only locally available information in the synapse is prioritized and global communication is severely penalized. This eliminates the need for the high-bandwidth global data movement and large memory buffers required by backpropagation, the primary source of energy consumption in deep learning models, addressing the fundamental energy and latency bottlenecks of conventional Von Neumann systems. Furthermore, NRL's reliance on noise for gradient approximation offers a paradigm shift: it transforms the device stochasticity common in neuromorphic circuits from an engineering challenge into a computational resource. The inherent noise in NRL is also advantageous for reinforcement learning setups, as it naturally introduces the stochasticity required for exploration \cite{kaelbling1996reinforcement, ladosz2022exploration}. Injected noise perturbs neural states, promoting deviations from expected values and thus introducing variations in the agent’s policy. Recent studies on noise-based adaptation \cite{Fernandez2024Ornstein} have demonstrated that noise-driven learning mechanisms can effectively balance exploration and exploitation, particularly in volatile environments.

Despite its promise, our work has limitations that suggest future research directions. The current requirement for both noisy and clean forward passes (or averaging multiple noisy passes) per synaptic update, while theoretically simple, introduces a time-latency and computational overhead. In practice, this overhead is mitigated by the accelerated dynamics of neuromorphic hardware, which typically operates at microsecond timescales, orders of magnitude faster than real-world environments \cite{davies2018loihi}. Thus, performing multiple forward passes potentially has a negligible impact on latency. From an energy perspective, the cost of extra passes is offset by eliminating the high-bandwidth data transport and memory buffering required by gradient-based methods, such as BP \cite{Fernandez2024}, which remain the primary energy bottlenecks in conventional learning systems. However, to further enhance this method, future work could explore alternative ways to estimate noise impacts, eliminating the need for multiple passes. 

In addition, while we focused on a rate-based model, the deployment of NRL on physical hardware like Intel Loihi~\cite{davies2018loihi} requires its extension to spiking neural networks (SNNs). Adapting NRL to this domain presents specific implementation challenges. For instance, the continuous eligibility traces used here would need to be replaced by synaptic tracers that accumulate discrete spike events via exponential decay kernels \cite{izhikevich2007solving}. Similarly, the injected noise used for gradient approximation could translate into stochastic membrane potential fluctuations or probabilistic firing thresholds \cite{maass2014noise}, transforming perturbations into spike-timing variability. Also, while our derivation assumes Gaussian noise, physical hardware often exhibits non-Gaussian stochasticity. However, as elaborated in Appendix \ref{appendix:hardware_noise}, the central limit theorem suggests that the aggregation of these discrete noise sources at the neuronal level effectively approaches a Gaussian distribution. Thus, our theoretical framework is expected to remain robust in physical neuromorphic systems.

Furthermore, the simplicity of our RPE calculation limits its robustness in more complex tasks requiring long-term planning. Future development should incorporate more sophisticated temporal difference learning mechanisms~\cite{sutton1988learning, sutton2018reinforcement}, such as a learned value function within an actor-critic architecture~\cite{konda1999actor}, to provide a richer temporally sensitive estimate of rewards. Finally, while backpropagation may offer faster convergence in unconstrained settings, recent studies~\cite{dalm2023effective, Fernandez2024}, including ours, indicate that this gap can be narrowed for specific networks and tasks.

Finally, regarding sample efficiency, NRL is an on-policy algorithm theoretically grounded in the policy gradient theorem \cite{sutton1999policy}. Although off-policy methods can improve data efficiency, they typically rely on mechanisms that conflict with biological and neuromorphic constraints. Incorporating off-policy mechanisms, such as experience replay \cite{mnih2015human} or importance sampling \cite{precup2000eligibility}, would require centralized memory buffers and global data transport, violating the principle of synaptic locality. Crucially, this restriction allows us to isolate and evaluate the performance of the learning rule itself, avoiding the obscuring factors introduced by add-ons. Future work could investigate how such off-policy augmentations can be adapted to the neuromorphic context to further enhance sample efficiency.

Our findings ultimately highlight the potential for unifying optimization methods and neurobiological principles. By drawing inspiration from both domains, we have developed a learning rule that not only offers an alternative to backpropagation but is also algorithmically suited for low-energy, event-driven neuromorphic hardware. In the same way, the constraints of biological realism can guide the development of more efficient and robust artificial learning algorithms. This synergistic approach provides a crucial new path for creating low-energy, fast, and adaptive AI systems \cite{zador2023catalyzing, marblestone2016toward}.

\section*{Acknowledgments}
This publication is part of the DBI2 project (024.005.022, Gravitation), which is financed by the Dutch Ministry of Education (OCW) via the Dutch Research Council (NWO).

\bibliographystyle{plain}
\bibliography{references}

\newpage
\appendix

\section{Gradient approximation using directional derivatives} 
\label{appendix:directional_derivatives}

Consider a function $g(\theta)$, where $\theta = (W_1,\ldots, W_L)$ are its parameters. The gradient of this function is defined as follows:
\begin{definition}
The gradient $\nabla g $ is a vector indicating the direction of the steepest ascent of the function $g$, with components as partial derivatives of $g(\theta)$:
\begin{equation}
\nabla g  
= \left[\frac{\partial g}{\partial \theta}\right]^\top 
= \left[ \frac{\partial g}{\partial W_1}, \ldots, \frac{\partial g}{\partial W_L} \right]^\top \,.
\end{equation}
\end{definition}

While $\nabla g$ captures the rate of change of 
$g$ in the steepest direction, a directional derivative gives the rate of change in a specified direction. For a unit vector $v = \flatfrac{\epsilon}{||\epsilon||}$, normalized via the Euclidean norm, we define the directional derivative as: 
\begin{definition}
The directional derivative of $g(\theta)$
along a unit vector $v =(v_{1},\ldots ,v_{n})$
is defined by the limit
\begin{equation}
\nabla _{v}{g}(\theta)=\lim _{h\to 0}{\frac {g(\theta +h v )-g(\theta)}{h}} \,,
\label{eq:dirder}
\end{equation}
where $h$ is a small step size.
\end{definition}
For a sufficiently small $h$ we numerically measure the directional derivative as 
\begin{equation}
\nabla _{{\epsilon}}{g}(\theta ) = \frac {g(\theta + h \epsilon )-g(\theta)}{h||\epsilon||} \,.
\end{equation}
This directional derivative can also be measured as a projection of
$\nabla g$ in the direction $v$, following the relation:
\begin{equation}
    \nabla _{v}{g}(\theta) = v \cdot \nabla{g}(\theta).
\end{equation}
We can now formally demonstrate how gradients can be approximated using directional derivatives.
\begin{theorem}
Let $\epsilon \sim \mathcal{N}(0, \sigma^2 I_n)$ (probability distribution $p(\epsilon)$) where $n$ is the number of dimensions in $\theta$. Exact gradients can be written in terms of directional derivatives using expectations
\begin{equation}
\nabla g(\theta) 
= n \mathbb{E}_{p(\epsilon)} \left[ v^\top \nabla_{v} g(\theta) \right] \,.
\label{eq:grad_direct_der}
\end{equation}
\label{thm:exact_grads_equiv}
\end{theorem}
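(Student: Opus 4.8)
The plan is to prove the identity \(\nabla g(\theta) = n\,\mathbb{E}_{p(\epsilon)}\!\left[v^\top \nabla_v g(\theta)\right]\) by substituting the projection formula \(\nabla_v g(\theta) = v\cdot\nabla g(\theta)\), which reduces the claim to a purely linear-algebraic statement about the Gaussian direction vector \(v = \epsilon/\|\epsilon\|\). Specifically, after substitution the right-hand side becomes \(n\,\mathbb{E}\!\left[v\,v^\top\right]\nabla g(\theta)\), so the entire theorem collapses to showing that \(\mathbb{E}\!\left[v\,v^\top\right] = \tfrac{1}{n} I_n\); the gradient \(\nabla g(\theta)\) is a fixed vector and can be pulled outside the expectation.

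First I would write \(v v^\top\) entrywise as \((v v^\top)_{ij} = \epsilon_i \epsilon_j / \|\epsilon\|^2\) and argue the off-diagonal terms vanish in expectation: since \(\epsilon \sim \mathcal{N}(0,\sigma^2 I_n)\) is symmetric under sign flips of any single coordinate, while \(\|\epsilon\|^2\) is invariant under such flips, the integrand \(\epsilon_i\epsilon_j/\|\epsilon\|^2\) is odd for \(i\neq j\) and integrates to zero. Next I would handle the diagonal: by the exchangeability (permutation symmetry) of the i.i.d.\ coordinates, \(\mathbb{E}\!\left[\epsilon_i^2/\|\epsilon\|^2\right]\) is the same for every \(i\), and these \(n\) terms sum to \(\mathbb{E}\!\left[\sum_i \epsilon_i^2/\|\epsilon\|^2\right] = \mathbb{E}[1] = 1\). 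Hence each diagonal entry equals \(1/n\), giving \(\mathbb{E}[vv^\top] = \tfrac1n I_n\), and multiplying by \(n\) yields exactly \(\nabla g(\theta)\).

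The one technical point worth stating carefully is that \(v\) is only defined almost surely (the event \(\epsilon = 0\) has probability zero under the continuous Gaussian density), so the expectations are well defined; I would also note that integrability is immediate because \(\|v v^\top\| \le 1\) pointwise, so dominated convergence / Fubini-type manipulations and the interchange of expectation with the linear map \(\nabla g(\theta)\) are all justified without further fuss. I do not expect a genuine obstacle here — the main "work" is simply invoking the sign-flip and permutation symmetries of the isotropic Gaussian cleanly; the slightly delicate part is making sure the reader sees that \(\nabla_v g(\theta)\) being a scalar times \(v\) is what turns the expectation into the matrix \(\mathbb{E}[vv^\top]\) acting on the gradient, rather than something that requires knowing \(g\). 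Note also that the result is exact, not asymptotic, and in particular does not depend on \(\sigma\), since \(v\) is scale-invariant; I would remark on this as a sanity check.
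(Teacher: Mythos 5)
Your proposal follows essentially the same route as the paper's own proof: substitute $\nabla_v g(\theta) = v\cdot\nabla g(\theta)$, pull the fixed gradient out of the expectation, and use $\mathbb{E}\!\left[v v^\top\right] = \tfrac{1}{n} I_n$ to conclude. The only difference is that you actually justify this key identity via sign-flip and permutation symmetry (and note the almost-sure definition of $v$ and $\sigma$-independence), whereas the paper simply asserts it; your argument is correct and slightly more complete.
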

\begin{proof}
\begin{align*}
\nabla g(\theta) & = n \mathbb{E}_{p(\epsilon)} \left[ v^\top \nabla_{v} g(\theta) \right] \,
\\ & = n \mathbb{E}_{p(\epsilon)} \left[ v v^\top \nabla g(\theta ) \right] \,
\\ & = n \mathbb{E}_{p(\epsilon)} \left[ v v^\top\right] \nabla g(\theta )  \,
\\& = n \mathbb{E}_{p(\epsilon)} \left[ \frac{\epsilon}{||\epsilon||}\frac{\epsilon^\top}{||\epsilon||} \right]  \nabla g(\theta ) \,
\\& = n \frac{1}{n} I_n  \nabla g(\theta ) = \nabla g(\theta) \,
\end{align*}
as we assume $p(\epsilon) = \mathcal{N}(0,\sigma^2 I_n)$, which gives $\mathbb{E}[v v^\top] = \mathbb{E}\left[\frac{\epsilon}{||\epsilon||}\frac{\epsilon^\top}{||\epsilon||} \right] = \frac{1}{n} I_n$. 
\end{proof}

Now, consider gradient descent in the direction of the gradient $\nabla g(\theta)$ as
\begin{equation}
    \theta_{t+1} = \theta_t + \alpha \nabla g(\theta_t).
\end{equation}
This update can be reformulated using directional derivatives.
\begin{theorem}
Let $\eta = \frac{\alpha n}{hK}$ and $\epsilon^{(i)} \sim \mathcal{N}(0, \sigma^2 I_n)$. 
Gradient descent is equivalent to the update rule
\begin{equation}
\theta_{t+1} = \theta_t + \alpha \sum_{i=1}^K \left[ \frac{\epsilon^{(i)}}{h||\epsilon^{(i)}||^2} \left[ g\left(\theta + h\epsilon^{(i)} \right)-g\left(\theta \right) \right]^\top \right] 
\end{equation}
in the limit when $h \to 0$ and $K\rightarrow \infty$.
\label{thm:grad_desc_equiv}
\end{theorem}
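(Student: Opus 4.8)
The plan is to combine Theorem~\ref{thm:exact_grads_equiv}, which rewrites the exact gradient as $n$ times an expectation of directional derivatives, with two independent limiting arguments: a finite-difference limit ($h \to 0$) that recovers the directional derivative from the secant expression $\flatfrac{(g(\theta+h\epsilon)-g(\theta))}{h\|\epsilon\|}$, and a Monte Carlo limit ($K \to \infty$) that replaces the expectation over $p(\epsilon)$ by an empirical average over i.i.d.\ samples $\epsilon^{(i)} \sim \mathcal{N}(0,\sigma^2 I_n)$. First I would fix $\theta = \theta_t$ and rewrite the summand: since $v^{(i)} = \flatfrac{\epsilon^{(i)}}{\|\epsilon^{(i)}\|}$, the term $\flatfrac{\epsilon^{(i)}}{(h\|\epsilon^{(i)}\|^2)}\,[g(\theta+h\epsilon^{(i)})-g(\theta)]^\top$ equals $v^{(i)} \cdot \flatfrac{[g(\theta+h\epsilon^{(i)})-g(\theta)]}{(h\|\epsilon^{(i)}\|)}$, and by Definition~\eqref{eq:dirder} the scalar factor converges to $\nabla_{v^{(i)}} g(\theta)$ as $h \to 0$. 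Hence for each fixed $K$, $\lim_{h\to 0}$ of the bracketed sum is $\sum_{i=1}^K v^{(i)} \nabla_{v^{(i)}} g(\theta)$, so the update becomes $\theta_t + \alpha \sum_{i=1}^K v^{(i)} \nabla_{v^{(i)}} g(\theta_t)$.

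Next I would take $K \to \infty$. With $\eta = \flatfrac{\alpha n}{(hK)}$ held in mind (the $n$ and $\flatfrac{1}{K}$ are exactly the normalizing constants), the sum $\flatfrac{1}{K}\sum_{i=1}^K v^{(i)} \nabla_{v^{(i)}} g(\theta_t)$ is an empirical mean of i.i.d.\ copies of $v\,\nabla_v g(\theta_t)$, so by the strong law of large numbers it converges almost surely to $\mathbb{E}_{p(\epsilon)}[v\,\nabla_v g(\theta_t)]$, provided this expectation is finite — which it is, since $v$ is a unit vector and $\nabla_v g(\theta_t) = v^\top \nabla g(\theta_t)$ is bounded by $\|\nabla g(\theta_t)\|$ for fixed $\theta_t$. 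Multiplying by $n$ and invoking Theorem~\ref{thm:exact_grads_equiv} gives $n\,\mathbb{E}_{p(\epsilon)}[v\,\nabla_v g(\theta_t)] = \nabla g(\theta_t)$. Therefore $\alpha\sum_{i=1}^K[\cdots] = \alpha n \cdot \flatfrac{1}{K}\sum_{i=1}^K v^{(i)}\nabla_{v^{(i)}} g(\theta_t) \to \alpha \nabla g(\theta_t)$ in the iterated limit, so the update rule reduces to $\theta_{t+1} = \theta_t + \alpha \nabla g(\theta_t)$, which is exactly gradient descent.

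The main obstacle — or at least the point demanding the most care — is the interchange and order of the two limits. Taking $h \to 0$ first for fixed $K$ is clean because it is just the definition of the directional derivative applied termwise to a finite sum; taking $K \to \infty$ afterwards is a routine law-of-large-numbers statement. The subtlety is that $\eta$ (hence the effective step) involves $\flatfrac{1}{(hK)}$, so one must be explicit that the constant $\alpha = \eta h K / n$ is what is being held fixed while $h\to0$ and $K\to\infty$, and that the displayed update is stated in terms of $\alpha$ precisely so that the $h$ and $K$ factors cancel against the finite-difference denominator and the Monte Carlo average. I would also note mild regularity assumptions needed for the finite-difference step: $g$ should be differentiable at $\theta_t$ (so the limit in Definition~\eqref{eq:dirder} exists and equals $v^\top \nabla g$), which is consistent with the smoothness already implicitly assumed when $\nabla g(\theta_t)$ is written down. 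Beyond that, the argument is a direct chaining of Theorem~\ref{thm:exact_grads_equiv} with elementary limits, and I would present it as such rather than belaboring measure-theoretic technicalities.
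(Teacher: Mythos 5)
Your proposal is correct and takes essentially the same route as the paper: it combines Theorem~\ref{thm:exact_grads_equiv} with the finite-difference limit $h \to 0$ for the directional derivative and the replacement of the expectation over $p(\epsilon)$ by an empirical average as $K \to \infty$, differing only in presentation (you run the argument forward from the update rule, make the law-of-large-numbers step explicit, and spell out the $\flatfrac{n}{K}$ normalization carried by $\eta = \flatfrac{\alpha n}{hK}$, where the paper argues backward from $\nabla g$ by formal substitution). No change of method, just a somewhat more careful write-up of the same argument.
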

\begin{proof}
    \begin{align*}
        \nabla g(\theta ) &= n \mathbb{E}_{p(\epsilon)} \left[ v \nabla_{v} g(\theta )^\top \right] \,
        \\ &= \lim_{h \rightarrow 0}  n \mathbb{E}_{p(\epsilon)} \left[ v \left[ \frac {g(\theta + h \epsilon )-g(\theta )}{h||\epsilon||} \,\right]^\top \right] \,
        \\ &= \lim_{h \rightarrow 0}  n \mathbb{E}_{p(\epsilon)} \left[ \frac{\epsilon}{||\epsilon||} \left[ \frac {g(\theta + h \epsilon )-g(\theta )}{h||\epsilon||} \,\right]^\top \right]\,
        \\ &= \lim_{h \rightarrow 0}  n \mathbb{E}_{p(\epsilon)} \left[ \frac{\epsilon}{h||\epsilon||^2} \left[ g(\theta + h \epsilon )-g(\theta ) \,\right]^\top \right].
    \end{align*}
    Now, substituting the expectation with a sampling under some empirical distribution:
    \begin{equation*}
        \nabla g(\theta ) 
        = \lim_{h \rightarrow 0} \lim_{K \rightarrow \inf} \frac{n}{hK} \sum_{i=1}^K \left[ \frac{\epsilon^{(i)}}{||\epsilon^{(i)}||^2} \left[ g\left(\theta + \epsilon^{(i)} \right)-g\left(\theta \right) \right]^\top \right]. 
    \end{equation*}
Finally, defining gradient descent on parameters $\theta$ as $\theta_{t+1} = \theta_t + \alpha \nabla g(\theta)$:
\begin{align*}
    \theta_{t+1} &= \theta_t + \alpha \nabla g(\theta)
    \\ &= \alpha \lim_{h \rightarrow 0} \lim_{K \rightarrow \inf} \frac{n}{hK} \sum_{i=1}^K \left[ \frac{\epsilon^{(i)}}{||\epsilon^{(i)}||^2} \left[ g\left(\theta + \epsilon^{(i)} \right)-g\left(\theta \right) \right]^\top \right] 
    \\ &= \lim_{h \rightarrow 0} \lim_{K \rightarrow \inf} \frac{\alpha n}{hK} \sum_{i=1}^K \left[ \frac{\epsilon^{(i)}}{||\epsilon^{(i)}||^2} \left[ g\left(\theta + \epsilon^{(i)} \right)-g\left(\theta \right) \right]^\top \right]. 
\end{align*}

\end{proof}
In practice, small perturbations, controlled by $h$, and limited numbers of noise samples are sufficient to approximate the gradient. For $K=1$, analogous to single-sample updates in stochastic gradient descent:
\begin{equation}
\theta_{t+1} = \theta_t + \eta \frac{\epsilon}{||\epsilon||^2} \rho  
\end{equation}
with $\rho = \left[ g(\theta + h\epsilon )-g(\theta) \right]^\top$.

\section{Model hyperparameters and training details} 
\label{appendix:hyperparams}

All hyperparameters were carefully tuned per method and problem to ensure fair comparisons across methods. The number of episodes was chosen to illustrate the convergence of our method relative to baseline methods, with 8000, 20000, and 1000 episodes used for the Acrobot, Cartpole, and Reaching problems, respectively. 

In all our experiments, we set the smoothing factor for reward estimation to $\lambda = 0.66$. This value was empirically found to offer a robust trade-off across all tasks, striking a balance between recent values (for quick adaptability) and a longer history (for robustness against rapid reward fluctuations). We observed that this choice was sufficiently efficient to be applied uniformly across all environments without the need for task-specific adjustment.

Each neural network consisted of an input, hidden, and output layer. Input units corresponded to environment observation elements, and output units to possible actions. Specifically, Acrobot, Cartpole, and Reaching used 6, 4, and 32 input units and 3, 2, and 2 output units, respectively, with hidden layer sizes of 64, 64, and 128. 
Table \ref{table:hyperparams} summarizes the learning rate $\eta$ and noise standard deviation $\sigma$ for each method and problem. Higher values of these parameters could lead to unstable training, while lower values may result in slower learning.

\begin{table}[!ht] 
\caption{Learning rate $\eta$ and noise standard deviation $\sigma$ for the different learning algorithms across problems. Dashes indicate that the parameter is not used.}
\centering
\begin{small}
\begin{tabular}{l|c c c c} 
  & & \textbf{BP} & \textbf{Ours} & \textbf{RMHL}\\
 \hline
 \multirow{2}{*}{\textbf{Acrobot}} & $\eta$ & 5e-3 & 5e-2
 & 5e-2 \\ 
 \cline{2-5} 
   & $\sigma$ & - & 1e-3 & 1e-3 \\
 \hline
  \multirow{2}{*}{\textbf{Cartpole}}  & $\eta$ & 5e-3 & 5e-2 & 1e-2 \\ 
 \cline{2-5} 
  & $\sigma$ & - & 1e-3 & 1e-1 \\
 \hline
  \multirow{2}{*}{\textbf{Reaching}}  & $\eta$ & 1e-2 & 1e-2 & 1e-1 \\ 
 \cline{2-5} 
  & $\sigma$ & - & 1e-3 & 1e-1 \\
\hline
\end{tabular}
\end{small}
\label{table:hyperparams}
\end{table}

\section{Noise distributions in neuromorphic hardware} 
\label{appendix:hardware_noise}

In our derivation of the NRL update rule, we assume that the perturbative noise $\xi$ follows a Gaussian distribution. This assumption is crucial for maintaining equality between the expected update and the true gradient. However, in physical neuromorphic implementations, noise sources at the device level may follow different distributions. 

In practice, neuromorphic substrates manifest stochasticity through physical processes. Thermal or Johnson–Nyquist noise \cite{johnson1928thermal}, arising from charge carriers, is inherently Gaussian and is always present in analog circuits. However, other sources are non-Gaussian. Shot noise \cite{schottky1918spontane}, caused by the discrete nature of electric charge crossing a barrier, follows a Poisson distribution. Random telegraph noise \cite{kirton1989noise}, common in memristive devices, manifests as discrete switching between two states, following a bimodal distribution.

Although individual synaptic or neuronal noise sources may follow non-Gaussian distributions, the NRL update rule operates on the aggregated activity influencing the neuron's state. A fundamental property of neural integration is the summation of inputs from multiple presynaptic sources. According to the central limit theorem (CLT) \cite{fischer2011history}, the sum of a large number of independent random variables, regardless of their original distribution, provided they have finite variance, converges to a Gaussian distribution. Consequently, the Gaussian assumption in $\xi$ used in our derivation is not only a mathematical convenience, but a statistically valid approximation of the effective aggregate noise seen in hardware implementations or biological networks.

Even in scenarios with low connectivity where the CLT approximation is not perfect, the learning rule is expected to remain robust. While Theorem \ref{thm:exact_grads_equiv} relies on this Gaussian assumption for exact equality between the expected update and the true gradient, in practice, optimization does not require the update vector to be identical to the true gradient, but to have a positive projection onto it, i.e., a positive cosine similarity. This principle is well-established in the literature, where algorithms, such as feedback alignment \cite{lillicrap2016random}, direct feedback alignment \cite{nokland2016direct}, or sign-symmetry \cite{liao2016important}, demonstrate that approximate gradients are sufficient for convergence, as long as they remain within $90^{\circ}$ of the steepest descent direction. Therefore, skewed non-Gaussian noise may reduce convergence speed by lowering this cosine similarity, but it does not orthogonalize the update vector, thus maintaining a valid descent trajectory.

\end{document}